\newcommand{\Ub}{\mathbf{U}}
\newcommand{\Vb}{\mathbf{V}}
\newcommand{\mb}{\mathbf}
\begin{document}
	
	\title{Adaptive Weighted Multi-View Clustering}
	
	\author{\name Shuo Shuo Liu \email shuoshuo.liu@psu.edu \\
 \addr Department of Statistics, The Pennsylvania State University, USA\\
		\name Lin Lin \email lynn.lin@duke.edu \\
		\addr Department of Biostatistics and Bioinformatics, Duke University, USA\\}
	
	
	\maketitle
	
	\begin{abstract}%
		Learning multi-view data is an emerging problem in machine learning research, and nonnegative matrix factorization (NMF) is a popular dimensionality-reduction method for integrating information from multiple views. These views often provide not only consensus but also complementary information.
		However, most multi-view NMF algorithms assign equal weight to each view or tune the weight via line search empirically, which can be infeasible without any prior knowledge of the views or computationally expensive. 
		In this paper, we propose a weighted multi-view NMF (WM-NMF) algorithm.  In particular, we aim to address the critical technical gap, which is to learn both view-specific weight and observation-specific reconstruction weight to quantify each view’s information content. The introduced weighting scheme can alleviate unnecessary views' adverse effects and enlarge the positive effects of the important views by assigning smaller and larger weights, respectively. 
		Experimental results confirm the effectiveness and advantages of the proposed algorithm in terms of achieving better clustering performance and dealing with the noisy data compared to the existing algorithms.
	\end{abstract}
	
	\begin{keywords}
		Clustering, Data Integration, Multi-View Data, Nonnegative Matrix Factorization, Weighting 
	\end{keywords}
	
\section{Introduction}
\label{sec:intro}

Learning multi-view data is an emerging problem in machine learning research, as multi-view data become more and more common in many real-world applications. 
For example, the multi-omics data are now ubiquitous where different biological layers such as genomics, epigenomics, transcriptomics, and proteomics can be obtained from the same set of objects~\citep{Hasin2017, 10.1371/journal.pgen.1009398}. In those scenarios, the same set of objects has different views collected from different measuring methods or modalities, where any particular single-view data may be inadequate to comprehensively describe the information of all the objects. 
Hence, one major goal of multi-view unsupervised learning is to search for a consensus clustering across views so that similar objects are grouped into the same cluster and dissimilar objects are separated into
different clusters.
In the literature, such a learning problem is called  multi-view clustering \citep{bickel2004multi}. 

There are mainly two groups of approaches in the existing literature: generative (model-based) and discriminative (similarity-based and dimension reduction-based) \citep{Rappoport2018}. For the generative approach, we typically use the mixture model and regression-based matrix factorization. The idea is to model each data view's probabilistic distribution and obtain a common clustering result by either allowing all views to share the same priors or derived from a shared latent factors~\citep{lashkari2008convex, shen2009integrative, tzortzis2009convex, tzortzis2010multiple, savage2010discovering, lock2013bayesian, gabasova2017clusternomics}. An advantage of the generative approach is that it provides a nice interpretation of what the cluster is built on, but this approach is more computationally expensive in the context of multi-view learning. The discriminative approach focuses on the objective function that optimizes the average similarities within clusters and dissimilarities between clusters.
Different objective functions result in different methods, such as multi-view spectral clustering ~\citep{wang2013multi, kumar2011co,kumar2011coN}, nonnegative matrix factorization for multi-view clustering~\citep{Liu2013, 6909425, 10.1093/bioinformatics/btv544, huang2014robust, 10.1093/nar/gks725}, and canonical correlation analysis~\citep{10.1145/1553374.1553391, klami2013bayesian, lai2000kernel, witten2009extensions, chen2013structure}.
The discriminative approach generally involves non-convex objective functions and it might be hard to find good solutions.

Nonnegative matrix factorization (NMF) is a well-known algorithm for dimension reduction and feature extraction for nonnegative data. Unlike other matrix factorization techniques~\citep{Golub1970, abdi2010principal, Zhao2015}, NMF provides a more intuitive and interpretable understanding through the \textit {parts-based representation}: a data point can be represented by only a few activated basis elements~\citep{turk1991eigenfaces, lee1999learning}. NMF has been shown the advantages of extracting sparse and meaningful information from high-dimensional data~\citep{lee1999learning}. The theoretical analysis further reveals the equivalence of NMF and spectral clustering and K-means clustering~\citep{ding2005equivalence}. Thus, NMF can also be viewed as a clustering method.  The multi-view NMF (MultiNMF)~\citep{Liu2013} is an extension of NMF problem to integrate multiple nonnegative data matrices obtained from a common set of data points.
The framework of MultiNMF attempts to approximate each view  with some constraints in order to obtain both consensus and view-specific information. Existing related methods tackle this problem with different objective functions motivated by different applications~\citep{10.1093/nar/gks725, 10.1093/bioinformatics/bts476, 10.1371/journal.pcbi.1004042, 10.1093/bioinformatics/btv544}. However, most existing MultiNMF related  methods either assume that all views are equally important or the view-specific weights are known a priori in deriving the consensus clustering. In practice, such an assumption may not be valid as we often have noisy datasets.

The aim of this paper is to design an effective multi-view NMF algorithm that not only can perform multi-view clustering but also quantify each view’s weight and each observation's reconstruction weight by learning the corresponding relative values across all views.  We expect this weighting mechanism to improve the clustering performance over traditional multi-view clustering algorithms. 

Our major contributions include: (1) The proposed method extends and improves the existing MultiNMF method by automatically computing both the view-specific and observation-specific reconstruction weights without requiring the use of prior knowledge. The two types of weights provide two different resolutions in understanding the effects of different views. Thus, the consensus matrix can be obtained by weighting different views, which efficiently extracts different information qualities from each view. 
(2) We study the properties of these two weighting schemes and provide guidance on choosing the tuning parameters.



The rest of the paper is organized as follows. In Section \ref{pre}, we introduce notations and overview existing algorithms most relevant to our proposed methods. 
In Sections \ref{model} and \ref{opt}, we present our proposed weighted multi-view NMF (WM-NMF) algorithm and study the optimization procedures.  
In Section \ref{exp}, experimental results are reported for the handwritten digit data and multi-omics biological data. Comparisons are made with some competing models and popular methods. We conclude with discussions in Section~\ref{dis}.

\section{Preliminary}
\label{pre}
Denote a nonnegative data matrix $\mathbf{X} = (\mathbf{x}_1,\dots, \mathbf{x}_N)\in\mathbb{R}^{M\times N}_+$, where $\mathbf{x}_i = (x_{1i},...,x_{Mi})^\top$ $\in \mathbb{R}_+^M$ is the $i$-th data point of $\mathbf{X}$ containing $M$ features.  	
NMF factorizes $\mathbf{X}$ into a product of two lower-dimensional nonnegative matrices: $\mathbf{X} \approx \mathbf{U}\mathbf{V}^\top$, where $\mathbf{U}\in\mathbb{R}_+^{M\times K}$, $\mathbf{V}\in\mathbb{R}_+^{N\times K}$, and $K < \min(M,N)$ is a positive integer.  
The NMF problem $\min_{\mathbf{U}, \mathbf{V}\ge0} \Vert\mathbf{X} -\mathbf{U}\mathbf{V}^\top\Vert^2_F$ with Frobenius norm is in general nonconvex and  NP-hard, but can be solved  with iterative updates that work well in many applications~\citep{lee1999learning, lee2001}.
Different from other matrix factorization techniques, NMF provides a more intuitive and interpretable understanding through the \textit {parts-based representation}: a data point can be represented by only a few activated basis elements.  Further,  $\mathbf{V}$  directly translates to data clustering by simply assigning each data point to the basis element on which it has the highest loading; that is, data point $i$ is placed in cluster $j$ if $\mathbf{V}_{i,j}$ is the largest entry in row $i$. \citet{ding2005equivalence} further shows the equivalence between NMF and $K$-means and spectral clustering.

The multi-view NMF (MultiNMF)~\citep{Liu2013} is an extension of NMF problem to integrate multiple nonnegative data matrices obtained from a common set of data points  and conducts clustering based on the low-rank representations.  Let $\{\mathbf{X}^{(1)},...,\mathbf{X}^{(n_v)}\}$ be a set of $n_v$ views of data points, with $\mathbf{X}^{(s)} \in \mathbb{R}^{M_s \times N}_+$. Without loss of generality,  we assume all the data matrices are pre-processed and transformed when necessary. The framework of MultiNMF  attempts to approximate each view $\mathbf{X}^{(s)} \approx \mathbf{U}^{(s)}{\mathbf{V}^{(s)}}^\top$ with some constraints in order to obtain both consensus  and view-specific information. 
More specifically, the MultiNMF minimizes the following objective function:
\begin{equation*}
	\label{eq:multiNMF}
	\sum_{s=1}^{n_v}\Big\Vert\mathbf{X}^{(s)} - \mathbf{U}^{(s)}{\mathbf{V}^{(s)}}^\top\Big\Vert^2_F + \sum_{s =1}^{n_v}\alpha_s\Big\Vert\mathbf{V}^{(s)}\mathbf{Q}^{(s)} - \mathbf{V}^*\Big\Vert^2_F
\end{equation*}
with respect to $\mathbf{U}^{(s)}, \mathbf{V}^{(s)}, \mathbf{V}^*\ge 0$.
The first part of the objective function performs NMF analysis independently on each view. The second part plays a key role in sharing information across views, and it regularizes the learned coefficient matrices $\mathbf{V}^{(s)}$'s towards a common $\mathbf{V}^{^*}$. We take $\mathbf{V}^{^*}$ as some latent
data structure shared by all views. The amount of information for each view contributing to $\mathbf{V}^{^*}$ is regularized by $\alpha_s$. Thus, $\alpha_s$ is the parameter that tunes the relative weight among views. 
$\alpha_s$'s have the constraints that $\sum_{s=1}^{n_v} \alpha_s = 1$ and $0 \le \alpha_s\le 1,$ $s = 1:n_v$.  
$\alpha_s$'s are crucial in determining the quality of the consensus matrix $\mathbf{V}^*$. 
The MultiNMF degenerates to the single-view learning when $\alpha_s$'s are binary values with only one component being $1$. The resulting consensus clustering is essentially determined by the view that provides the best approximation to the original data.

Most existing MultiNMF related methods tackle different problems with slightly different objective functions motivated by different applications~\citep{10.1093/nar/gks725, 10.1093/bioinformatics/bts476, 10.1371/journal.pcbi.1004042, 10.1093/bioinformatics/btv544}.  
However, most of them assume that the weight vector is determined either by prior knowledge (which may be impractical when such knowledge is missing) or assigned to be equal. In practice, such an assumption may not be valid as we often have noisy datasets. 
In Section~\ref{model}, we provide an alternative solution to allow a more interpretable and transparent understanding of how to derive the consensus clustering among views. 

\section{Weighted multi-view NMF}
\label{model}

To take the advantage of the consensus matrix used in MultiNMF and learn the weight vector automatically,  we adopt the idea of exponential parameter to automatically quantify each view's information content~\citep{tzortzis2009convex, xu2016weighted}. In addition, as demonstrated in the handwritten digit dataset and the multi-omics data for liver hepatocellular carcinoma from Section~\ref{exp}, the same data point across views is likely heterogeneous in determining the clustering structure. Thus, it is also important to determine the weight of each observation to describe the relative information content. 
This is achieved by quantifying the relative reconstruction errors for the same data point across all views.
We refer to such weight as observation-specific reconstruction weight. 
For simplicity, we call it \textit{reconstruction weight} throughout the paper.
The strategy of weighting has also been studied in the literature but in a different approach, for example, \cite{li2008weighted} weighs each input clustering.

With the abovementioned information, we propose a weighted multi-view NMF (WM-NMF) framework for a more interpretable data integration procedure, while it achieves the ability to automatically update view weight and reconstruction weight.
More specifically, WM-NMF works on minimizing the following objective function:
\begin{equation}
\label{equ:O}
\begin{aligned} 
	\mathcal{O} =&  \sum_{s=1}^{n_v}\Big\Vert\Big\{\mathbf{X}^{(s)} - \mathbf{U}^{(s)}{\mathbf{V}^{(s)}}^\top\Big\} \text{Diag}(\bm {w}^{(s)})\Big\Vert_F^2 +\\ &\sum_{s=1}^{n_v}\alpha^p_s\Big\Vert\mathbf{V}^{(s)}\mathbf{Q}^{(s)} - \mathbf{V}^*\Big\Vert^2_F+\beta g(\mb V^{(1:n_v)}),    
\end{aligned}
\end{equation}
where $g(\Vb^{(1:n_v)})$ is a regularization term on $\Vb^{(1)},\dots, \Vb^{(n_s)}$ which can be set for different purposes, {such as sparse NMF \citep{hoyer2004non}, orthogonal NMF \citep{zhang2019greedy, liang2020multi}, and graph NMF \citep{cai2010graph, huang2014robust}}. $\beta > 0$ is the corresponding tuning parameter. 
We minimize $\mathcal{O}$ over $\mathbf{U}^{(s)}, \mathbf{V}^{(s)}, \bm{w}^{(s)}, \alpha_s$, and $\mathbf{V}^*$ under the constraints that $\mathbf{V}^* \ge 0, \mathbf{V}^{(s)}\ge 0, \mathbf{U}^{(s)} \ge0,  \sum_{s=1}^{n_v}\alpha_s = 1, \alpha_s \ge 0,  \sum_{s=1}^{n_v}w_i^{(s)} = 1, w_i^{(s)} \ge 0, s = 1:n_v, i = 1:N$.

Here, $p \ge 1$ is the exponential parameter and it controls the sparsity of $\alpha_s$. We provide a discussion about $p$ in Section~\ref{alpha_est}. The vector $\bm\alpha = (\alpha_1,...,\alpha_{n_v})^\top$ represents the relative weight among different views and the agreement between $\mathbf{V}^{(s)}$ and $\mathbf{V}^*$. It reflects each view's contribution for reaching the consensus matrix $\mathbf{V}^*$.
$\bm{w}^{(s)} = (w^{(s)}_1,\dots,w^{(s)}_N)^\top$, where $w^{(s)}_i$ is the reconstruction weight of data point $i$ in view $s$. Its functionality is different from the view-specific weights, where $\alpha_s$ provides an  overall measure to quantify the contribution from view $s$ towards a consensus matrix. 
Intuitively, a relatively smaller value of ${w}^{(s)}_i$ implies that the low-dimensional representation $\mb v_i^{(s)}$ fails to reconstruct observation $\mb x_i^{(s)}$, compared to other views.
The introduction of  ${w}^{(s)}_i$ provides the flexibility to allow  one view to compensate for the shortcoming in another, and potentially prevents the spurious results from noisy or highly divergent views. 
Therefore, by constraining $\sum_{s=1}^{n_v}w_i^{(s)} = 1$ and $\sum_{s=1}^{n_v}\alpha_s = 1$, we show the feasibility to automatically update the weights as demonstrated in Section~\ref{opt}.

WM-NMF framework extends and improves the existing literature with several benefits. 	First, it automatically computes the weight vectors without relying on any prior knowledge.
Second, it calculates the consensus matrix by weighting different coefficient matrices, which efficiently extracts different qualities of information from each view.
Third, it can alleviate the negative effects of unimportant views and enlarge the positive effects of important views by assigning small and large weights on different views and observations, respectively.
Lastly, additional regularization can be easily incorporated based on our WM-NMF framework. For example, a manifold regularization can be used to further improve the clustering results~\citep{cai2010graph}.

The idea of manifold regularization is 
based on the local invariance assumption such that the geometric structure of the original dataset is inherited in the low-rank representations~\citep{belkin2001laplacian}. To extend the existing manifold regularization for single-view NMF to accommodate our multi-view NMF, we first define an adjacency matrix $\mathbf{A}^{(s)}$ to measure the closeness between any two data points represented by view $s$. 
We adopt the Gaussian kernel, $a_{ij}^{(s)}=\exp \left(-\frac{\|\mathbf{x}^{(s)}_{i}- \mathbf{x}^{(s)}_{j}\|^2_2}{\sigma^{2}}\right)$ if $\mathbf{x}^{(s)}_j \in \mathcal{N}^{(s)}_i$ and 0 otherwise,
where $\mathcal{N}^{(s)}_i$ denotes the neighbour for point $i$ represented by view $s$. $\mathcal{N}^{(s)}_i$ is generated using $K$-nearest neighbour which utilizes the distance between two data points: 
$
\|\mathbf{x}^{(s)}_{i}- \mathbf{x}^{(s)}_{j}\|^2_2$. The number of neighbours is set to be 5 and $\sigma^2=1$ as suggested in \citet{cai2010graph}.

Thus, together with the corresponding low-dimensional representation $\mathbf{v}_i^{(s)}$, the manifold regularization is defined as 
$$
S = \frac{1}{2} \sum_{i, j=1}^{N}\left\|\mathbf{v}^{(s)}_i-\mathbf{v}^{(s)}_j\right\|^{2} a_{ij}^{(s)}=\text{Tr}\left(\mathbf V^{(s)^\top} \mathbf{L}^{(s)} \mathbf V^{(s)}\right),
$$
where $\mathbf{L}^{(s)}=\mathbf{D}^{(s)}-\mathbf{A}^{(s)}$ is the graph Laplacian matrix and $\mathbf{D}$ is a diagonal matrix with the $i$th diagonal entry being $\sum_{j=1}^{N} a_{ij}^{(s)}$. $\text{Tr}(\cdot)$ denotes the trace of a matrix.  By minimizing $S$, we expect that if $\mathbf{x}_i^{(s)}$ and $\mathbf{x}_j^{(s)}$ are close, i.e., $a^{(s)}_{ij}$ is large, the corresponding low-dimensional representations $\mathbf{v}_i^{(s)}$ and $\mathbf{v}_j^{(s)}$ are also close together.

Replacing $g(\mb V^{(1:n_v)})$ in Eq.~\eqref{equ:O} by the above manifold regularization, we can define the objective function of the manifold regularized WM-NMF as 
\begin{equation}
	\label{obj}
	\begin{aligned}
		&\mathcal{O}=  \sum_{s=1}^{n_v}\Big\Vert\Big\{\mathbf{X}^{(s)} - \mathbf{U}^{(s)}{\mathbf{V}^{(s)}}^\top\Big\} \text{Diag}(\bm {w}^{(s)})\Big\Vert_F^2 +\\ &\sum_{s=1}^{n_v}\alpha^p_s\Big\Vert\mathbf{V}^{(s)}\mathbf{Q}^{(s)} - \mathbf{V}^*\Big\Vert^2_F+ 
		\beta\sum_{s=1}^{n_v} \text{Tr} (\mathbf V^{(s)^\top} \mathbf{L}^{(s)} \mathbf{V}^{(s)}).
	\end{aligned}
\end{equation}
We will discuss how to choose $\beta$ in the experiment section.
The scenario for WM-NMF without manifold regularization can be retrieved by setting $\beta=0$. 
An illustration of the manifold regularized WM-NMF is shown in Figure \ref{model_flow}.
In Section \ref{opt}, the optimization procedures are based on the objective function $\mathcal{O}$ in Eq.~\eqref{obj}.

\begin{figure}[hbt!]
	\centering
	\includegraphics[width=0.8\textwidth]{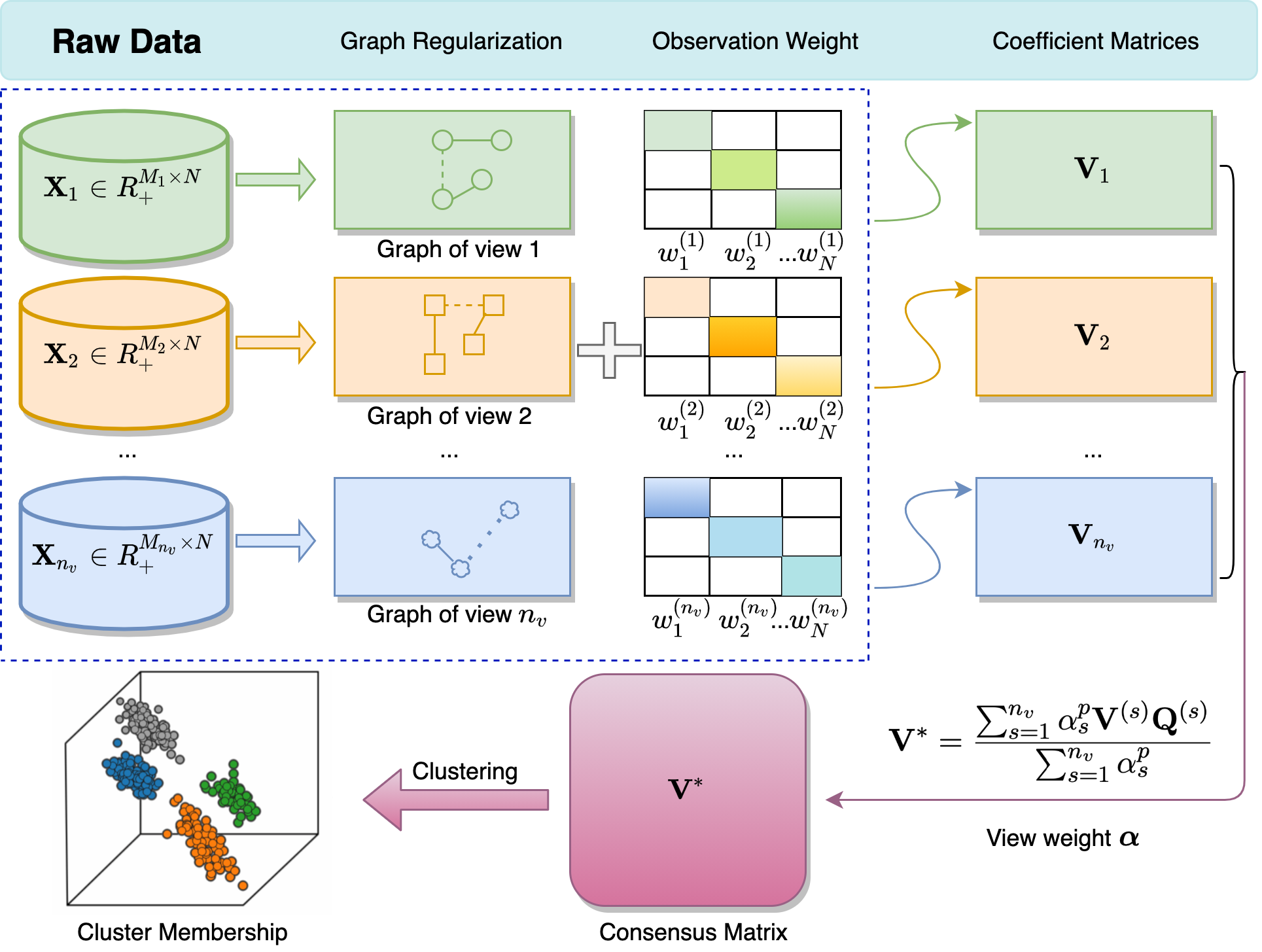}
	\caption{The illustration of the weighted multi-view NMF with manifold regularization for clustering.}
	\label{model_flow}
\end{figure}

\section{Optimization}
\label{opt}
The joint optimization function in Eq.~\eqref{obj} is nonconvex over all variables. However, if we keep four of the five variables ($\Ub^{(s)}$, $\Vb^{(s)}$,  $\Vb^*$, $\bm{\alpha}$, $\bm{w}^{(s)}$) fixed, and optimize over one of them, the problem is convex and can be solved efficiently. 
We thus consider the following iterative alternating minimization method until convergence.
At each iteration, we optimize over the five variables alternatively. 

More specifically, we first update $\Ub^{(s)}$ and $\Vb^{(s)}$ individually while keeping the others fixed. 
We call these procedures the \textit{inner iteration}.
Updating $\mb U^{(s)}$ or $\mb V^{(s)}$ only needs to solve the single-view objective function represented by
\begin{equation} \label{eql0}
	\begin{aligned}
	&\mathcal{O}_0=\Big\Vert \left\{\mb X^{(s)}-\mb U^{(s)}\mb V^{(s)^\top}\right\}\text{Diag}(\bm w^{(s)})\Big\Vert_F^2+\\
	&\alpha_s^p \Big\Vert \mb V^{(s)}\mb Q^{(s)}-\mb V^*\Big\Vert_F^2+\beta \text{Tr} (\mb V^{(s)^\top} \mb L^{(s)}\mb V^{(s)}).
	\end{aligned}
\end{equation}
After obtaining $\Ub^{(s)}$ and $\Vb^{(s)}$, we solve the exact solutions of $\bm\alpha$, $\bm w^{(s)}$ and $\mathbf{V}^*$.
We call these procedures the \textit{outer iteration}.

\subsection{Update $\Ub^{(s)}$}
\label{update_u}
To simplify the notation, we omit the view index $(s)$ for the derivations of the inner iteration.
Taking the derivative of $\mathcal{O}_0$ with respect to $u_{ij}$ while taking into account of the nonnegative constraint on $u_{ik}$, and using the complementary slackness condition $\Psi_{ik}u_{ik}=0$, where $\Psi_{ik}$ is the Lagrange multiplier for the constraint $u_{ik}\geq 0$ leads to the multiplicative update rule of $u_{ik}$:
\begin{equation*}
\label{U}
\begin{aligned}
u_{ik} &\leftarrow u_{ik} \frac{\left[\mathbf{X}\text{Diag}^2(\bm w) \Vb\right]_{ik}+\alpha_s^p\sum_{j=1}^{N} v_{jk} v_{jk}^{*}}{\left[\Ub\Vb^\top\text{Diag}^2(\bm w)\Vb\right]_{ik}+\alpha_s^p\sum_{l=1}^{M} u_{lk} \sum_{j=1}^{N} v_{jk}^{2}}\\
&=u_{ik}-\underbrace{\frac{u_{ik}}{\left[\Ub\Vb^\top\text{Diag}^2(\bm w)\Vb\right]_{ik}+\alpha_s^p\sum_{l=1}^{M} u_{lk} \sum_{j=1}^{N} v_{jk}^{2}}}_\text{step size}\times\frac{\nabla_U \mathcal{O}_0}{2},
\end{aligned}
\end{equation*}
where $\nabla_U \mathcal{O}_0=\frac{\partial \mathcal{O}_0}{\partial u_{ik}}$. Details for the derivation is in Appendix A. 
The update can be viewed as an adaptive gradient descent algorithm, where the step size should be nonzero.
Therefore, we should initialize a positive $u_{ik}$, otherwise $u_{ik}$ equals to $0$ for all subsequent iterations. 

\subsection{Update $\Vb^{(s)}$}
Similarly, we omit the view index $(s)$ for notation simplicity. 
Let $\Phi_{jk}$ be the Lagrange multiplier for the constraint $v_{jk}\geq 0$.
Setting the derivative of $\mathcal{O}_0$ with respect to $v_{jk}$ to be 0 while taking into consideration of the nonnegative constraint of $v_{jk}$, and using the complementary slackness condition $\Phi_{jk}v_{jk}=0$, we have the multiplicative update rule of $v_{jk}$:

\begin{equation*}
\label{V}
\begin{aligned}
v_{jk} &\leftarrow v_{jk} \frac{\left[\text{Diag}^2(\bm w)\mathbf{X}^\top\Ub\right]_{jk}+\alpha_s^p\left[\Vb^*\mathbf{Q}^\top\right]_{jk}+\beta\left[\mathbf{A}\Vb\right]_{jk}}{\left[\text{Diag}^2(\bm w)\Vb\Ub^\top\Ub\right]_{jk}+\alpha_s^p\left[\Vb\mathbf{Q}\mathbf{Q}^\top\right]_{jk}+\beta\left[\mathbf{D}\Vb\right]_{jk}}\\
&=v_{jk}-\underbrace{\frac{v_{jk}}{\left[\text{Diag}^2(\bm w)\Vb\Ub^\top\Ub\right]_{jk}+\alpha_s^p\left[\Vb\mathbf{Q}\mathbf{Q}^\top\right]_{jk}+\beta\left[\mathbf{D}\Vb\right]_{jk}}}_\text{step size}\times\nabla_V \mathcal{O}_0,
\end{aligned}
\end{equation*}
where $\nabla_V \mathcal{O}_0=\frac{\partial \mathcal{O}_0}{\partial v_{jk}}$, and $\mathbf{L} = \mathbf{D} - \mathbf{A}$ is the graph Laplacian matrix defined in Section~\ref{model}. 
The update can be viewed as an adaptive gradient descent algorithm, where the step size should be nonzero. Again, we should make sure the initialization of $v_{jk}$ is positive, otherwise $v_{jk}=0$ at all subsequent iterations.

Proposition \ref{uv>0} below ensures that when we initialize positive $u_{ik}$ and $v_{jk}$, the entries of $\Ub$ and $\Vb$ will always be updated as positive numbers, and the updated values will not get trapped in 0.
We provide the proof in Appendix B.
\begin{proposition}
	\label{uv>0}
	If $u_{ik}^1>0$ and $v_{jk}^1>0$, $\forall i,j,k$, then 
	$u_{ik}^t>0, v_{jk}^t>0$, $\forall i,j,k, \forall t\geq 1$, where $t$ denotes the $t$-th update. 
\end{proposition}

\subsection{Estimate ${\alpha}_s$}
\label{alpha_est}
This is equivalent to minimizing the following objective over $\alpha_s$ that
$$
\min_{\alpha_s} \alpha_s^p \Big\Vert \mathbf{V}^{(s)}\mathbf{Q}^{(s)}-\mathbf{V}^*\Big\Vert_F^2,\text{ subject to }\sum_{s=1}^{n_v}\alpha_s=1. 
$$ 
When $p=1$, the optimal solution of $\alpha_s$ is
$$\hat\alpha_s=\left\{\begin{array}{ll}
	1, & s=\underset{s'\in{1,...,n_v}}{\arg\min} \Vert \Vb^{(s')}\mathbf{Q}^{(s')}-\Vb^* \Vert_F^2 \\
	0, & \text { otherwise.}
\end{array}\right.$$
The above solution implies that $p=1$ only offers a binary solution of $\alpha_s$, i.e., the consensus matrix $\mathbf{V}^*$ depends on a single view. Such a solution is obviously too restrictive, as it  prevents the partial information sharing among views. On the other hand, when $p>1$, we obtain the optimal solution for $\bm\alpha$ as:
\begin{equation}
	\label{alpha}
	\hat\alpha_s=\frac{1}{\sum_{s'=1}^{n_v}\left(\frac{\Vert \Vb^{(s)}\mathbf{Q}^{(s)}-\Vb^*\Vert_F^2}{\Vert \Vb^{(s')}\mathbf{Q}^{(s')}-\Vb^*\Vert_F^2}\right)^{\frac{1}{p-1}}}.
\end{equation}

The solution implies that when the $s$-th view's information content contributes more to the consensus matrix, i.e., $\Vert\Vb^{(s)}\mathbf{Q}^{(s)}-\Vb^*\Vert_F^2$ is smaller, $\hat\alpha_s$ becomes larger. Therefore, the more important the view is, the larger the corresponding weight is.

\textbf{Discussion about $p$}:
Denote $A^{(s)}=\Vert\Vb^{(s)}\mathbf{Q}^{(s)}-\Vb^*\Vert_F^2$ and $A^{(s')}=\Vert\Vb^{(s')}\mathbf{Q}^{(s')}-\Vb^*\Vert_F^2$.
It is clear that as $p$ goes to infinity, 
the denominator of 
Eq.~(\ref{alpha}) converges to $n_v$, which gives
uniform weights to each view. Meanwhile, if the normalized $s$-th view $\mathbf{V}^{(s)}\mathbf{Q}^{(s)}$ contributes the most to the consensus matrix $\mathbf{V}^*$, i.e., $A^{(s)}/A^{(s')}<1$ for $s'\neq s$, then $p\rightarrow 1^+$ implies $\alpha_s\rightarrow 1$. On the other hand,
if the normalized $s$-th view $\Vb^{(s)}\mathbf{Q}^{(s)}$ contributes the least to the consensus matrix $\Vb^*$, i.e., $A^{(s)}/A^{(s')}>1$ for $s'\neq s$, then $p\rightarrow 1^+$ implies $\alpha_s\rightarrow 0$.
Hence, a smaller $p$ results in a sparser weight vector $\bm\alpha$.
Generally, a moderate size $p$ should be used so that the relevant information from different views is preserved and the effect of consensus constraint is kept.

\subsection{Estimate $w_i^{(s)}$}
To optimize ${w}^{(s)}_i$, we only consider the terms involving  ${w}^{(s)}_i$ in the objective that we consider
\begin{equation*}
	\begin{aligned}
		& \min_{\bm w^{(s)}\ge 0, \sum_{s=1}^{n_v}w_i^{(s)}=1 }\Big\Vert\Big\{\mathbf{X}^{(s)} - \mathbf{U}^{(s)}{\mathbf{V}^{(s)}}^\top\Big\} \text{Diag}(\bm w^{(s)})\Big\Vert^2_F  \\
		&= \sum_{i=1}^N {w_i^{(s)}}^2\sum_{j=1}^{M_s} {\mathbf{Y}_{ji}^{(s)}}^2,
	\end{aligned}
\end{equation*} 
where $\mathbf{Y}^{(s)} = \mathbf{X}^{(s)} - \mathbf{U}^{(s)}{\mathbf{V}^{(s)}}^\top$. Since we only optimize the weight for a single observation, it is equivalent to minimizing ${w_i^{(s)}}^2\sum_{j=1}^{M_s} {\mathbf{Y}_{ji}^{(s)}}^2$ with the constraint $\sum_{s=1}^{n_v} w^{(s)}_i = 1$.  We have that  the optimal solution is
\begin{equation}
	\label{w}
	\hat{w}^{(s)}_i = \left(\sum_{s' = 1}^{n_v}\frac{1}{\sum_{j=1}^{M_{s'}}(\mathbf{Y}^{(s')}_{ji})^2}\sum_{j=1}^{M_s}(\mathbf{Y}^{(s)}_{ji})^2\right)^{-1}.
\end{equation}

It is easy to find the solution is nonnegative.
The above solution shows that the reconstruction weight is determined by the reconstruction error of the $s$-th view on the $i$-th observation across all the features. The smaller the error is compared with other views, the larger the weight is.  
Note that ${w}^{(s)}_i$ is the weight of an observation, so the algorithm may run slowly with a very large sample size.

\subsection{Estimate $\Vb^*$}
To optimize $\Vb^*$, we only consider the terms involving $\Vb^*$ in the objective $\mathcal{O}=\sum_{s=1}^{n_v} \alpha_s^p \Big\Vert \Vb^{(s)}\mathbf{Q}^{(s)}-\Vb^*\Big\Vert_F^2$.
Setting the derivative of $\mathcal{O}_v$ with respect to $\Vb^*$ to 0, we have that the optimal solution is  
\begin{equation}
	\label{Vo1}
	\mathbf{V}^{*}=\frac{\sum_{s=1}^{n_{v}} \alpha_s^p \mathbf{V}^{(s)} \mathbf{Q}^{(s)}}{\sum_{s=1}^{n_{v}} \alpha_s^p}.
\end{equation}
Since $\Vb^{(s)}\geq 0$, $\mathbf{Q}^{(s)}\geq 0$, and $\alpha_s>0$, $\Vb^*$ is nonnegative.
The underlying assumption of the multi-view clustering is that all the views can agree and reduce to a consensus matrix with different weights, so the cluster assignments can be determined according to the consensus matrix $\Vb^*$  by the maximum coefficient assignments.
However, \citet{welch2019single} points out the spurious alignments in highly divergent datasets by the maximum coefficient assignments.
In the experiment section, we use the default function \texttt{spectralcluster} in MATLAB on $\mb V^*$ to obtain the cluster membership.

\subsection{Summary of the algorithm}
We summarize the pseudocode of the WM-NMF algorithm below.
The algorithm stops when the maximum number of iterations is reached or it converges, i.e., when the difference between the two consecutive iterations is less than the threshold $9\times 10^{-8}$.
The algorithm converges to a local minima since the objective function is nonconvex.

\begin{algorithm}[hbt!]
  {alg:gauss}%
  {\caption{Weighted Multi-View NMF (WM-NMF)}}%
{%
\textbf{Input:} {Dataset \{$\mathbf{X}^{(1)},\dots,\mathbf{X}^{(n_v)}$\}; rank $K$; exponential parameter $p$; manifold parameter $\beta$.}
	
	\textbf{Output:}{Basis matrices $\Ub^{(1)},\dots,\Ub^{(n_v)}$; Coefficient matrices $\Vb^{(1)},\dots,\Vb^{(n_v)}$; Consensus matrix $\mathbf{V}^*$; View weight vector $\alpha_s$ for each view; Reconstruction weight $w_i^{(s)}$ for each view.}
	
	Normalize each view $\mathbf{X}^{(s)}$ such that $\|\mathbf{X}^{(s)}\|_1=1$.
	
	Initialize $\Ub^{(s)}$, $\Vb^{(s)}$, $\mathbf{V}^*$, set equal view weight $\alpha_s=1/n_v$ and reconstruction weight $w^{(s)}_i=1/N$.
	
	\vspace{1mm}
	
	\textbf{repeat} \hspace{22mm} $\leftarrow$ (outer iteration)
	
	\hspace{2mm} \textbf{for} $s=1:n_v$ \textbf{do} 
	\quad $\leftarrow$ (inner iteration)
	
	\hspace{5mm} \textbf{repeat}
	
	\hspace{8mm} Fixing $\bm w^{(s)}$, $\bm\alpha$, $\mathbf{V}^*$ and $\Vb^{(s)}$, update $\Ub^{(s)}$ by Eq. (\ref{U});
	
	\hspace{8mm} Fixing $\bm w^{(s)}$, $\bm\alpha$, $\Vb^*$ and $\Ub^{(s)}$, update $\Vb^{(s)}$ by Eq. (\ref{V});
	
	\hspace{5mm} \textbf{until} $\mathcal{O}_0$ converges or the maximum number of iteration is reached. 
	
	\hspace{2mm} \textbf{end for}
	
	\hspace{2mm} Fixing $\bm w^{(s)}$, $\Ub^{(s)}$, $\Vb^{(s)}$, $s = 1,...,n_v$, and $\Vb^*$, update $\bm\alpha$ by Eq. (\ref{alpha}); 
	
	\hspace{2mm} Fixing $\bm\alpha$, $\Ub^{(s)}$, $\Vb^{(s)}$, and $\Vb^*$, update $\bm w$ by Eq. (\ref{w}); 
	
	\hspace{2mm} Fixing $\bm w^{(s)}$, $\Ub^{(s)}$, $\Vb^{(s)}$, $s = 1,...,n_v$, and $\bm\alpha$, update $\Vb^*$ by Eq. (\ref{Vo1});
	
	\textbf{until} the  maximum number of iteration is reached or the algorithm converges.
	\vspace{1mm}
	
	Perform clustering analysis based on $\Vb^*$.
	\label{algorithm1}
}%
\end{algorithm}

Since the objective function is nonconvex, the solution may depend on the initialization. 
We initialize $\Ub^{(s)}$ and $\Vb^{(s)}$ by the Graph Regularized Non-negative Matrix Factorization (GNMF) \citep{cai2010graph}.

\begin{theorem}
	\label{conv}
	The objective function $\mathcal{O}$ converges to a local minima under Algorithm~\ref{algorithm1}.
\end{theorem}

The proof is given in Appendix B. We verify Theorem \ref{conv} through different datasets and provide complexity analysis in Appendix C.
Besides, we include the complexity analysis (operation counts) in Appendix D.

Discussion of the tuning parameters: A larger value of $K$  better approximates the original data, but on the other hand, it raises the risk of overfitting. Many approaches have been developed to select the number of basis elements ${K}$ for the NMF problem, such as Bi-cross validation \citep{owen2009bi}, Stein's unbiased risk estimator \citep{ulfarsson2013tuning}, minimum description length (MDL) \citep{squires2017rank}, and missing data imputation \citep{lin2020optimization}. 
Overall, all these methods show the capacity of selecting $K$ in certain datasets empirically.
In this paper, we assume prior information of $K$ is given.
In Appendix C, we show that WM-NMF  works considerately well within a range of $K$ in terms of the clustering accuracy. Thus, it is quite robust to the choice of $K$.
Empirical results on other tuning parameters are also included in Appendix C.

\section{Experiments}
\label{exp}
In this section,  we present experimental results on one handwritten digit dataset and one multi-omics dataset. For each dataset, we use six metrics to evaluate the clustering performance: accuracy (ACC), normalized mutual information (NMI), Precision, Recall, F-score, Adjusted Rand index (Adj-RI).
For all these metrics, higher values indicate better clustering performance.
Details and formulas of them are available in \citet{manning2008introduction}. 
Empirical studies on the tuning parameters are given in Appendix C.

In addition, we compare  WM-NMF  with several competing multi-view clustering algorithms described below:
\begin{enumerate}
	\itemsep0em
	\item $K$-means:
	The default \texttt{kmeans} function in MATLAB is implemented to obtain the results. There are two strategies: (1) Apply $K$-means independently on each single view, and select the best  performance of $K$-means as the final results. We denote this strategy as BSV-kmeans. (2) Apply $K$-means on the data where all the views are concatenated. We denote this strategy as  
	ConcatK.
	
	\item Spectral clustering: The classical spectral clustering algorithm is applied to the datasets. 
	The default function \texttt{spectralcluster} in MATLAB is implemented to obtain the results. Similar to the above $K$-means, we denote
	BSV-Spectral as the best performance of spectral clustering over each single view, and 
	ConcatSpectral represents the result of spectral clustering on the data with all views concatenated.
	
	\item MultiNMF: Multi-view nonnegative matrix factorization with equal weight.
	MultiNMF1 is implemented with equal weights summing to 1, i.e.,  $\alpha_s=1/n_v$ for $s = 1,...,n_v$.
	MultiNMF2 is implemented with equal weight such that $\alpha_s = 0.01$, which is shown to have the best performance in \cite{Liu2013}. The clustering is performed based on $\Vb^*$ by $K$-means.
	
	\item MLRSSC: Multi-View low-rank sparse subspace clustering, which is shown to be very competitive in ~\citep{brbic2018multi}. More specifically, we implement four variants: P-MLRSSC, C-MLRSSC, P-KMLRSSC and C-KMLRSSC which represent pairwise, centroid-based, pairwise kernel, and centroid-based kernel multi-view low-rank sparse subspace clustering, respectively.
	
	\item NMF-W1 and NMF-W2: They are the  simplified version of WM-NMF.
We denote NMF-W1 as the case when $\bm{w}$ is fixed a priori in WM-NMF.  Comparing NMF-W1 to WM-NMF, we emphasize the importance of the reconstruction weight.
	Likewise, we denote NMF-W2 as the situation when $\bm{w}$ is fixed and $\beta$ is set to 0, such that there is no manifold regularization in WM-NMF.
	Comparing NMF-W2 to WM-NMF, we emphasize the importance of the reconstruction weight and the manifold regularization.
\end{enumerate}

For all the  experiments,  we set $p=5$ and $\beta=0.01$ for WM-NMF and we use the default settings for all the other algorithms. 

\subsection{Data description}

We summarize the key information of all datasets in Table~\ref{table2} and provide the data descriptions in Appendix C.

\begin{table}[hbt!]
	\caption{Detailed information about the experiment datasets.}\label{table2}
	\centering
	\begin{tabular}{|c|cccc|}
		\hline
		\textbf{Dataset}                      & \multicolumn{1}{c}{\textbf{view}} & \multicolumn{1}{l}{\textbf{observations}} & \multicolumn{1}{l}{\textbf{features}} & \textbf{clusters} \\ \hline
		\multirow{6}{*}{Synthetic} & $\mathbf{X}^{(1)}$ &5000 &100 &10         \\
		& $\mathbf{X}^{(2)}$ &5000 &150 &10         \\
		& $\mathbf{X}^{(3)}$ &5000 &50 &10         \\
		& $\mathbf{X}^{(4)}$ &5000 &200 &10         \\ 
		& $\mathbf{X}^{(5)}$ &5000 &100 &-         \\
		& $\mathbf{X}^{(6)}$ &5000 &50 &-        \\ \hline
		\multirow{4}{*}{Handwritten} & fou &2000 &76 &10         \\
		& pix &2000 &240 &10         \\
		& zer &2000 &47 &10         \\
		& fac &2000 &216 &10      \\ \hline
		\multirow{3}{*}{LIHC} 
		& GE &404 &15397 &2        \\
		& CNA &404 &16384  &2         \\
		& DNAm &404 &16384  &2         \\\hline
	\end{tabular}
\end{table}

\subsection{Results on the handwritten digit dataset}
Table \ref{res} compares the result of WM-NMF to the other algorithms based on the handwritten digit dataset.
It is worth noting that WM-NMF obtains the highest scores for all six evaluation metrics. All the other competing methods show clustering performance significantly worse than WM-NMF.   
Now, we analyze the effectiveness of the manifold regularization, view-specific weight and reconstruction weight, respectively.
First, we observe that NMF-W1 performs better than NMF-W2,  {which implies the importance of the manifold regularization for clustering analysis.}
Second, WM-NMF, NMF-W1, and NMF-W2 all outperform MultiNMF, this shows {the ineffectiveness of using equal weight for $\bm \alpha$}.
Third, we find that WM-NMF hits higher scores than NMF-W1. This demonstrates the advantage of using reconstruction weight and the ability of WM-NMF on integrating heterogeneous data.


\begin{table*}[htb]
	\caption{Comparisons of clustering performance between WM-NMF and other competing methods for handwritten digit dataset. Numbers in the bracket represent standard deviations of the corresponding scores, which is obtained based on 20 replications for each algorithm.}	\label{res}
	
	\resizebox{\textwidth}{!}{%
		\begin{tabular}{|l|l|l|l|l|l|l|}
			\hline\multicolumn{1}{|c|}{Algorithm} & \multicolumn{1}{|c|}{ACC} & \multicolumn{1}{|c|}{NMI} & \multicolumn{1}{|c|}{Precision} & \multicolumn{1}{|c|}{Recall} & \multicolumn{1}{|c|}{F-score} & \multicolumn{1}{|c|}{Adj-RI}\\ \hline
			 BSV-kmeans & 0.69 (0.07) & 0.70 (0.03) & 0.61 (0.05) & 0.67 (0.04) & 0.63 (0.05) & 0.59 (0.05) \\ 
			 ConcatK   & 0.63 (0.07) & 0.62 (0.03) & 0.51 (0.05) & 0.59 (0.03) & 0.55 (0.04) & 0.49 (0.04) \\ 
			BSV-Spectral   & 0.68 (0.00) & 0.71 (0.00) & 0.58 (0.00) & 0.68 (0.00) & 0.62 (0.00) & 0.58 (0.00) \\ 
			ConcatSpectral   & 0.12 (0.00) & 0.01 (0.00) & 0.10 (0.00) & 0.41 (0.04) & 0.16 (0.00) & 0.00 (0.00) \\ 
			MultiNMF1   & 0.64 (0.03) & 0.58 (0.02) & 0.51 (0.03) & 0.54 (0.03) & 0.52 (0.03) & 0.47 (0.03) \\ 
			MultiNMF2   & 0.79 (0.04) & 0.72 (0.02) & 0.66 (0.03) & 0.69 (0.03) &0.68 (0.03) & 0.64 (0.03) \\ 
			P-MLRSSC   & 0.75 (0.07) & 0.77 (0.04) & 0.68 (0.07) & 0.75 (0.05) & 0.71 (0.06)& 0.68 (0.07) \\ 
			C-MLRSSC & 0.75 (0.06) & 0.77 (0.04) & 0.68 (0.06) & 0.74 (0.05) & 0.71 (0.06)& 0.67 (0.06) \\ 
			P-KMLRSSC   & 0.77 (0.06) & 0.72 (0.02) & 0.66 (0.05) & 0.68 (0.05) & 0.67 (0.04)& 0.63 (0.05) \\ 
			C-KMLRSSC & 0.76 (0.07) & 0.72 (0.03) & 0.65 (0.06) & 0.68 (0.05) & 0.67 (0.05)& 0.63 (0.06) \\ 
			NMF-W1   & 0.92 (0.03) & 0.88 (0.03) & 0.85 (0.06) & 0.88 (0.03) & 0.86 (0.05) & 0.84 (0.05) \\ 
			NMF-W2   & 0.81 (0.08) & 0.77 (0.05) & 0.69 (0.10) & 0.76 (0.06) & 0.72 (0.08) & 0.69 (0.09) \\ 
			WM-NMF   & \textcolor{black}{0.96 (0.02)} & \textcolor{black}{0.93 (0.01)} & \textcolor{black}{0.93 (0.04)} & \textcolor{black}{0.94 (0.01)} & \textcolor{black}{0.93 (0.03)} & \textcolor{black}{0.93 (0.03)} \\
			\hline
	\end{tabular}}
\end{table*}

\subsection{Results on the multi-omics LIHC dataset}
\citet{seal2020estimating} uses both CNV and DNAm to predict {the sample status, either tumor or normal sample,} so it is treated as a classification problem and the accuracy is 95.1\%.
In this paper, we treat this data as unsupervised problem and integrate all the three omics data to conduct clustering analysis. 

Figure \ref{LIHC_result} presents the analysis results.
As it shows, both WM-NMF and NMF-W2 outperform the other algorithms in all the evaluation metrics while WM-NMF behaves much better than NMF-W2.
The average scores of the proposed WM-NMF for the 6 evaluation metrics are 0.97, 0.70, 0.99, 0.95, 0.97, and 0.83, respectively.
Note that we do not plot the results of MLRSSC algorithms due to their code constraints.
Instead, we report the highest six metric scores with standard deviations among the four algorithms: 0.57 (0.04), 0.18 (0.00), 0.88 (0.00), 0.54 (0.00), 0.67 (0.00) and 0.10 (0.00).
Besides, the proposed WM-NMF algorithm has lower standard deviations compared to all the other algorithms.
It is worth noting that the clustering algorithm WM-NMF achieves slightly higher accuracy than the neural network approach for classification in \citet{seal2020estimating}.
\begin{figure}[hbt!]
	\centering
	\includegraphics[width=0.8\textwidth]{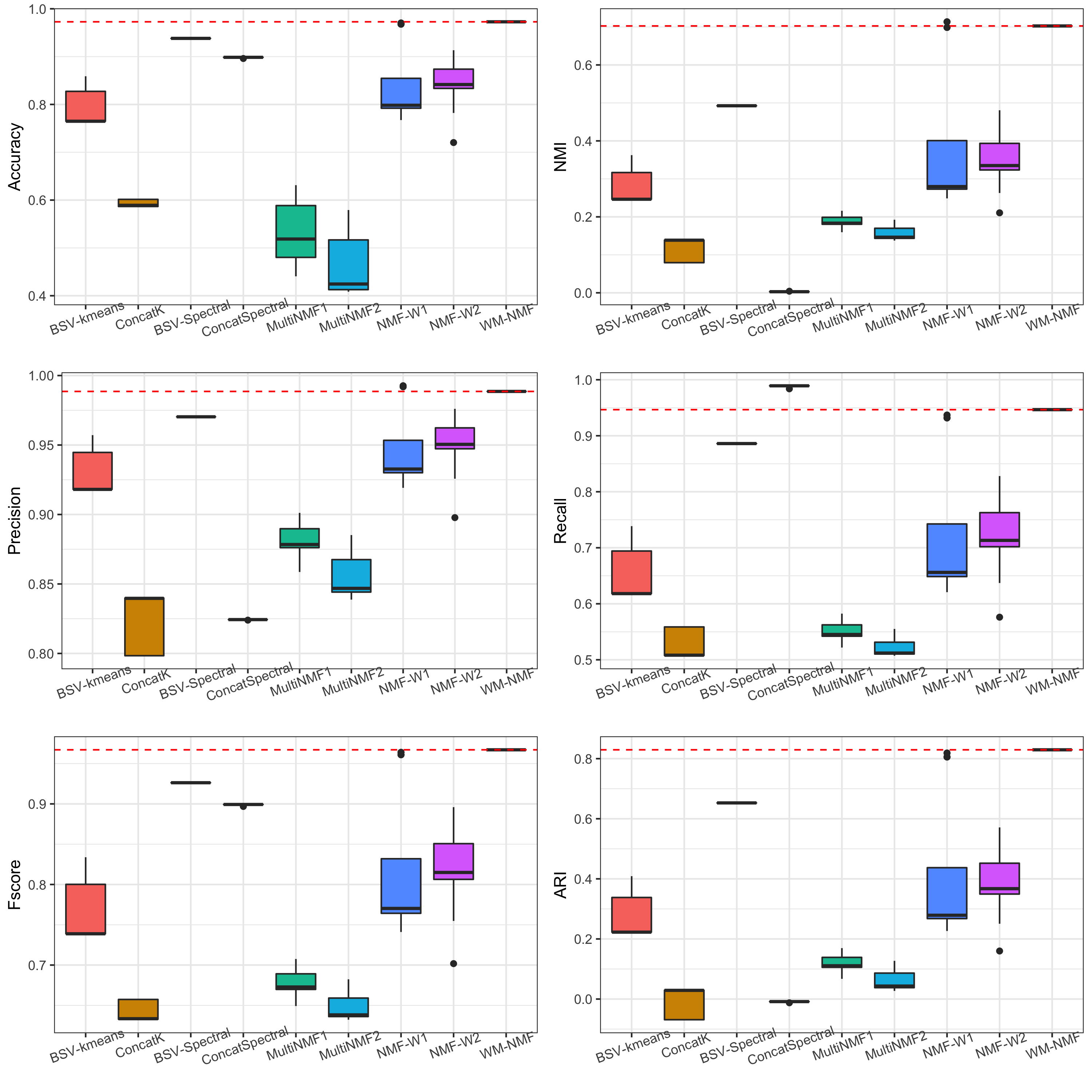}
	\caption{Boxplots representing clustering results for LIHC dataset based on 20 replications for each algorithm. 
		The dashed red line is the average score of WM-NMF. }
	\label{LIHC_result}
\end{figure}

\section{Discussion}
\label{dis}
We develop a weighted multi-view NMF (WM-NMF) algorithm, with the goal of learning multi-view data for integrative clustering analysis.
One key feature of WM-NMF is the ability to learn both view-specific and reconstruction weights to quantify each view’s information content. Thus, the unnecessary views’ adverse effects can be alleviated and the positive effects of the important views are enlarged, making WM-NMF robust to the potentially heterogeneous multi-view data. 
Such ability enables WM-NMF to deal with heterogeneous and noisy data. 
Technically, our proposed weighting scheme can be integrated into other methods such as model-based approaches. 
Therefore, we may combine the benefits of the model-based approaches with the weighting scheme to study the theoretical properties.


\appendix
\clearpage
\onecolumn
\section*{Appendix A}
In this section, we provide the derivations for updates of $\mathbf U^{(s)}$, $\mathbf V^{(s)}$, and $\alpha_s$. 

\subsection*{Update of $\mathbf  U^{(s)}$ and $\mathbf  V^{(s)}$}
With the notations in the main text, we have
\begin{equation*}
	\label{der_U}
	\frac{\partial \mathcal{O}_1}{\partial u_{ik}}=2\left[\Ub\Vb^\top\text{Diag}^2(\bm w)\Vb\right]_{ik}-2\left[\mathbf{X}\text{Diag}^2(\bm w) \Vb\right]_{ik}+\alpha_s^p P_{ik}+\Psi_{ik},
\end{equation*}
where $P_{ik}=2\left(\sum_{l=1}^{M} u_{lk} \sum_{j=1}^{N} v_{jk}^{2}-\sum_{j=1}^{N} v_{jk} v_{jk}^{*}\right)$ and $\Psi_{ik}$ is the Lagrange multiplier for the constraint $u_{ik}\geq 0$.
Using the complementary slackness condition $\Psi_{ik}U_{ik}=0$, plugging the expression $P_{ik}$ into Eq. (\ref{der_U}) and setting it to 0, we have 
$$
\left[\Ub\Vb^\top\text{Diag}^2(\bm w)V\right]_{ik}u_{ik}-\left[\mathbf{X}\text{Diag}^2(\bm w) \Vb\right]_{ik}u_{ik}+\alpha_s^p\left(\sum_{l=1}^{M} u_{lk} \sum_{j=1}^{N} v_{jk}^{2}-\sum_{j=1}^{N} v_{jk} v_{jk}^{*}\right)u_{ik}=0
$$
$$\Leftrightarrow \left[\Ub\Vb^\top\text{Diag}^2(\bm w)V\right]_{ik}u_{ik}+\alpha_s^p\sum_{l=1}^{M} u_{lk} \sum_{j=1}^{N} v_{jk}^{2}u_{ik}=\left[\mathbf{X}\text{Diag}^2(\bm w) \Vb\right]_{ik}u_{ik}+\alpha_s^p\sum_{j=1}^{N} v_{jk} v_{jk}^{*}u_{ik}.$$

Similarly for $\mathbf V^{(s)}$, we have
$$
\frac{\partial \mathcal{O}_2}{\partial v_{jk}}=\left[\text{Diag}^2(\bm w)\Vb\Ub^\top\Ub\right]_{jk}-\left[\text{Diag}^2(\bm w)\mathbf{X}^\top\Ub\right]_{jk}+\alpha_s^p\left[\Vb\mathbf{Q}\mathbf{Q}^\top-\Vb^*\mathbf{Q}^\top\right]_{jk}+\beta\left[\mathbf{L}\Vb\right]_{jk}+\Phi_{jk}=0
$$
$$\downdownarrows$$
$$\left[\text{Diag}^2(\bm w)\Vb\Ub^\top\Ub\right]_{jk}+\alpha_s^p\left[\Vb\mathbf{Q}\mathbf{Q}^\top\right]_{jk}+\beta\left[D\Vb\right]_{jk}+\Phi_{jk}=\left[\text{Diag}^2(\bm w)\mathbf{X}^\top\Ub\right]_{jk}+\alpha_s^p\left[\Vb^*\mathbf{Q}^\top\right]_{jk}+\beta\left[\mathbf{A}\Vb\right]_{jk},$$
Multiplying the above equation by $v_{jk}$ and using the complementary slackness condition $\Phi_{jk}V_{jk}=0$ gives the result.

\subsection*{Update of $\alpha_s$}	
When $p>1$, setting the derivative of $\mathcal{O}$ that only contains $\alpha_s$ with respect to $\alpha_s$ to 0, we get
$$p\alpha_s^{(p-1)}A+\lambda_1=0\quad\Rightarrow\quad \alpha_s=\left(-\frac{\lambda_1}{pA}\right)^{\frac1{p-1}},$$
where we assume $A^{(s)}=\| \Vb^{(s)}\mathbf{Q}^{(s)}-\Vb^*\|_F^2>0$.
Given the constraint that $\sum_{s'=1}^{n_v}\alpha_{s'}=1$, we have
$$\sum_{s'=1}^{n_v}\left(-\frac{\lambda_1}{pA^{(s')}}\right)^{\frac1{p-1}}=1\quad\Rightarrow\quad \left(-\lambda_1\right)^{\frac1{p-1}}=\frac{1}{\sum_{s'=1}^{n_v}\left(\frac{1}{pA^{(s')}}\right)^{\frac{1}{p-1}}},$$
Finally, we obtain the solution of $\alpha_s$
\begin{equation*}
	\hat\alpha_s=\frac{1}{\sum_{s'=1}^{n_v}\left(\frac{A^{(s)}}{A^{(s')}}\right)^{\frac{1}{p-1}}}=\frac{1}{\sum_{s'=1}^{n_v}\left(\frac{\Vert \Vb^{(s)}\mathbf{Q}^{(s)}-\Vb^*\Vert_F^2}{\Vert \Vb^{(s')}\mathbf{Q}^{(s')}-\Vb^*\Vert_F^2}\right)^{\frac{1}{p-1}}}.
\end{equation*}

\section*{Appendix B}
In this section, we provide the proofs for Proposition \ref{uv>0} and Theorem \ref{conv}. 

\subsection*{Proof of Proposition \ref{uv>0}}
	First we assume that the denominators in Eq.~(\ref{U}) and (\ref{V}) are always well-defined. The update rules by \cite{lee2001} are not well-defined if the denominators are 0.
		This may happen in a very rare case when all the terms on the denominators are 0.
		In such case, a small positive number can be added to avoid 0 \citep{lin2007convergence}.
		When it is added, the analyses keep the same, so we stick to the situations without the small positive number in this paper.
	
	When $t=1$, Theorem~\ref{uv>0} holds by the assumption of this theorem.
	For $t>1$, we prove by induction. We first prove for the case of $\Ub$. 
	Assuming the results are true at $t$th iteration, we note that from $t$ to $t+1$, the step size for updating $u_{ik}$ in Eq.~\eqref{U} is positive:
	$$\frac{u_{ik}}{\left[\Ub\Vb^\top\text{Diag}^2(\bm w)\Vb\right]_{ik}+\alpha_s^p\sum_{l=1}^{M} u_{lk} \sum_{j=1}^{N} v_{jk}^{2}}>0.$$
	We now consider two situations for the derivative $\nabla_U \mathcal{O}_0$:\\
	Case 1: When $\nabla_U \mathcal{O}_0=0$, $u_{ik}^{t+1}=u_{ik}^t$ and it converges as the complementary slackness condition suggests.\\
	Case 2: When $\nabla_U \mathcal{O}_0\neq0$, 
	\begin{equation*}
		\begin{aligned}
			u_{ik} &\leftarrow u_{ik}-\frac{u_{ik}}{\left[\Ub\Vb^\top\text{Diag}^2(\bm w)\Vb\right]_{ik}+\alpha_s^p\sum_{l=1}^{M} u_{lk} \sum_{j=1}^{N} v_{jk}^{2}}\times\frac12\nabla_U \mathcal{O}_0\\
			&=u_{ik}-u_{ik}\frac{\frac12\nabla_U \mathcal{O}_0}{\left[\Ub\Vb^\top\text{Diag}^2(\bm w)\Vb\right]_{ik}+\alpha_s^p\sum_{l=1}^{M} u_{lk} \sum_{j=1}^{N} v_{jk}^{2}}\\
			&=u_{ik}-u_{ik}\left(1-\frac{\left[\mathbf{X}\text{Diag}^2(\bm w) \Vb\right]_{ik}+\alpha_s^p\sum_{j=1}^{N} v_{jk} v_{jk}^{*}}{\left[\Ub\Vb^\top\text{Diag}^2(\bm w)\Vb\right]_{ik}+\alpha_s^p\sum_{l=1}^{M} u_{lk} \sum_{j=1}^{N} v_{jk}^{2}}\right)\\
			&>0.
		\end{aligned}
	\end{equation*}
	The inequality follows the definition of $\nabla_U \mathcal{O}_0$.
	When $\nabla_U \mathcal{O}_0>0$, the term in the bracket is between 0 and 1.
	When $\nabla_U \mathcal{O}_0<0$, the term in the bracket is negative.
	Both imply that $u_{ik}^{t+1}>0$.
	The proof for $v_{jk}^t, \forall t\geq1$ is the same as the proof for $\Ub$ and we omit it here.

Prior to the details of proving Theorem \ref{conv}, we introduce a lemma that is essential to the proof.
\begin{lemma}[\citet{lee2001}]
	\label{lee2001}
If $G(h, h')$ is an auxiliary function of $J(h)$, then $J(h)$ is nonincreasing under the update rule	
\begin{equation}
	\label{nonin}
h^{(t+1)}=\underset{h}{\operatorname{argmin}}  G(h,h^{(t)}).
\end{equation}
\end{lemma}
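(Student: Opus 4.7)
The plan is to prove the lemma by a short three-link chain of inequalities that chains the two defining properties of an auxiliary function with the optimality of the update rule. Nothing more sophisticated should be needed: the whole argument lives inside the definition.

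First I would fix an arbitrary iteration index $t$ and write down what we know. By the definition of an auxiliary function applied at $h = h^{(t)}$, we have $G(h^{(t)}, h^{(t)}) = J(h^{(t)})$. By the update rule (\ref{nonin}), $h^{(t+1)}$ is a minimizer of $G(\,\cdot\,, h^{(t)})$, so in particular $G(h^{(t+1)}, h^{(t)}) \le G(h^{(t)}, h^{(t)})$. Finally, applying the inequality half of the auxiliary function definition at $h = h^{(t+1)}$ and $h' = h^{(t)}$ gives $J(h^{(t+1)}) \le G(h^{(t+1)}, h^{(t)})$.

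Stringing these three facts together yields
\begin{equation*}
J(h^{(t+1)}) \;\le\; G(h^{(t+1)}, h^{(t)}) \;\le\; G(h^{(t)}, h^{(t)}) \;=\; J(h^{(t)}),
\end{equation*}
which is exactly the nonincreasing property. Since $t$ was arbitrary, the sequence $\{J(h^{(t)})\}_{t \ge 0}$ is monotonically nonincreasing.

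There is essentially no obstacle here, which makes sense given that this is the foundational lemma behind the multiplicative-update convergence arguments and the real work in such proofs is always the construction of a tractable auxiliary function $G$ for the specific $J$ at hand (which in our setting are $\mathcal{O}_1$ and $\mathcal{O}_2$). The only mild subtlety to flag is that the argmin in (\ref{nonin}) must exist; if one wished to be maximally careful one would replace argmin by any $h^{(t+1)}$ with $G(h^{(t+1)}, h^{(t)}) \le G(h^{(t)}, h^{(t)})$, and the same chain still goes through. I would not belabor this point in the write-up.
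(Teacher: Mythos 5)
Your proof is correct and is exactly the argument the paper gives: the chain $J(h^{(t+1)}) \le G(h^{(t+1)}, h^{(t)}) \le G(h^{(t)}, h^{(t)}) = J(h^{(t)})$, justified by the two defining properties of an auxiliary function together with the minimizing property of the update. The paper states this chain in a single line immediately after the lemma; your write-up merely makes the same steps explicit.
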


\subsection*{Proof of Theorem \ref{conv}}
	The updates for $\Vb^*$ and $\alpha_s$ give exact solutions for the minimization of $\mathcal{O}$ when others are fixed.
	Therefore, we only need to prove that $\mathcal{O}$ is nonincreasing under the update rules of $\Ub^{(s)}$ and $\Vb^{(s)}$, $s = 1,...,n_v$. Again to ease the notation without confusion, we drop $(s)$ from the notations, and we simply write $\Vb$ and $\Ub$ to refer to a specific view.

	The proof is established by defining an auxiliary function and showing the Taylor-expansion of the objective function is less than or equal to the auxiliary function.
	The update rules are element-wise, and we only need to show $L_{ik}$ and $J_{jk}$ are nonincreasing for Equations (\ref{U}) and (\ref{V}), where $L_{ik}$ and $J_{jk}$ denote the part of $\mathcal{O}$ relative to $u_{ik}$ and $v_{jk}$ only, respectively.
	They are the same as $\mathcal{O}_1$ and $\mathcal{O}_2$ as defined above.
	For $u_{ik}$, if we define the function
	$$
	\begin{aligned}
		& G\left(u_{ik}, u_{ik}^{t}\right)=L_{ik}\left(u_{ik}^{t}\right)+L_{ik}'\left(u_{ik}^{t}\right)\left(u_{ik}-u_{ik}^{t}\right) \\
		+&\left\{\frac{\left(\Ub^{t} \Vb^{T}\text{Diag}^2(\bm w) \Vb\right)_{ik}}{u_{ik}^{t}}+\frac{\alpha_{s}^p \sum_{i=1}^{M} u_{ik}^{t} \sum_{j=1}^{N} v_{jk}^{2}}{u_{ik}^{t}}\right\}\left(u_{ik}-u_{ik}^{t}\right)^{2},
	\end{aligned}
	$$
	then we have $G(u_{ik}, u_{ik})=L_{ik}(u_{ik})$. 
	
	Next, we need to show $G(u_{ik}, u_{ik}^t)\geq L_{ik}(u_{ik})$.
	The Taylor expansion of $L_{ik}\left(u_{ik}\right)$ gives
	$$
	L_{ik}\left(u_{ik}\right) =L_{ik}\left(u_{ik}^{t}\right)+L_{ik}'\left(u_{i k}^{t}\right)\left(u_{ik}-u_{i k}^{t}\right)+\frac{1}{2} L_{ik}''\left(u_{i k}^{t}\right)\left(u_{ik}-u_{ik}^{t}\right)^{2},
	$$
	with the second order derivative $L_{ik}''\left(u_{ik}\right)=2\left[\Vb^{T} \text{Diag}^2(\bm w)\Vb\right]_{kk}+2 \alpha_s^p \sum_{j=1}^{N} v_{jk}^{2}$.
	Comparing the Taylor-expansion of $L_{ik}\left(u_{ik}\right)$ to $G\left(u_{ik}, u_{ik}^{t}\right)$, we only need to show 
	$$
	\frac{\left\{\Ub^{t} \Vb^{T} \text{Diag}^2(\bm w)\Vb\right\}_{ik}}{u_{ik}^{t}}+\frac{\alpha_{s}^p \sum_{i=1}^{M} u_{ik}^{t} \sum_{j=1}^{N} v_{jk}^{2}}{u_{ik}^{t}}\geq \left\{\Vb^{T} \text{Diag}^2(\bm w)\Vb\right\}_{kk}+ \alpha_s^p \sum_{j=1}^{N} v_{jk}^{2}.$$
	This is easy to verify by comparing the first and second terms of the above inequality, respectively.
	We have, according to the nonnegative constraints on $\Ub$ and $\Vb$	$$\left\{\Ub^{t} \Vb^{T} \text{Diag}^2(\bm w)\Vb\right\}_{ik}=\sum_l^K u_{il}^t \left[\text{Diag}^2(\bm w)\Vb\right]_{lk}\geq u_{ik}^{t}\left\{\Vb^{T} \text{Diag}^2(\bm w)\Vb\right\}_{kk},$$
	$$\alpha_{s}^p \sum_{i=1}^{M} u_{ik}^{t} \sum_{j=1}^{N} v_{jk}^{2}\geq\alpha_{s}^p u_{ik}^{t} \sum_{j=1}^{N} v_{jk}^{2}.$$
	Thus, $G(u_{ik}, u_{ik}^t)$ is an auxiliary function of $L_{ik}(u_{ik})$.
	Replacing $G(h,h^t)$ in Eq. (\ref{nonin}) by $G\left(u_{ik}, u_{ik}^{t}\right)$, we have 
	\begin{equation*}
		\begin{aligned}
			u_{ik}^{t+1}&= u_{ik}^t- u_{ik}^t\frac{L_{ik}'\left(u_{i k}^{t}\right)}{2\left[\Ub\Vb^\top\text{Diag}^2(\bm w)\Vb\right]_{ik}+2\alpha_s^p\sum_{l=1}^{M} u_{lk} \sum_{j=1}^{N} v_{jk}^{2}} \\
			&=u_{ik}^t\frac{\left[X\text{Diag}^2(\bm w) v\right]_{ik}+\alpha_s^p\sum_{j=1}^{N} v_{jk} v_{jk}^{*}}{\left[\Ub\Vb^\top\text{Diag}^2(\bm w)\Vb\right]_{ik}+\alpha_s^p\sum_{l=1}^{M} u_{lk} \sum_{j=1}^{N} v_{jk}^{2}}.
		\end{aligned}
	\end{equation*}
	The result follows Lemma 1 in \citet{lee2001} that $L_{ik}$ is nonincreasing under the iteration $h^{t+1}=\arg\min_h G(h, h^t)$. 
	Since the objective function is bounded below by 0, the monotone convergence theorem implies the convergence.
	
	Similar statements for the proof of $v_{jk}$ can be established by defining the auxiliary function 
	$$
	\begin{aligned}
		&G(v_{jk}, v_{jk}^{t}) =J_{jk}(v_{jk}^{t})+J_{jk}'(v_{jk}^{t})(v_{jk}-v_{jk}^{t})\\ +&\left\{\frac{\left[\text{Diag}^2(\bm w)\Vb^t \Ub^\top\Ub\right]_{jk}+\alpha_s^p\left[\Vb^t\mathbf{Q}\mathbf{Q}^\top\right]_{jk}+\beta\left[\mathbf{D}\Vb^t\right]_{jk}}{v_{jk}^{t}}\right\}(v_{jk}-v_{jk}^{t})^{2}.
	\end{aligned}$$
	It is easy to see $G(v_{jk}, v_{jk})=J_{jk}(v_{jk})$ and the remaining part is to show $G(v_{jk}, v_{jk}^t)\geq J_{jk}(v_{jk})$. 
	The Taylor-expansion of $J_{jk}(v_{jk})$ gives
	$$
	J_{jk}\left(v_{jk}\right) =J_{jk}\left(v_{jk}^{t}\right)+J_{jk}'\left(v_{j k}^{t}\right)\left(v_{jk}-v_{jk}^{t}\right)+\frac{1}{2} J_{jk}''\left(v_{jk}^{t}\right)\left(v_{jk}-v_{jk}^{t}\right)^{2},
	$$
	with the second order derivative $J_{jk}''(v_{jk})=2\left[\text{Diag}^2(\bm w)\right]_{jj}\left[\Ub^\top\Ub\right]_{kk}+2\alpha_s^p\mathbf{Q}\mathbf{Q}^\top+2\beta \left[\mb L\right]_{jj}$ (note that $\mb L$ is the graph Laplacian matrix defined in section \ref{model}).
	Comparing the Taylor-expansion of $J_{jk}(v_{jk})$ to $G(v_{jk}, v_{jk}^{t})$, we are left to show
	$$\left\{\frac{\left[\text{Diag}^2(\bm w)\Vb^t \Ub^\top\Ub\right]_{jk}+\alpha_s^p\left[\Vb^t\mathbf{Q}\mathbf{Q}^\top\right]_{jk}+\beta\left[\mathbf{D}\Vb^t\right]_{jk}}{v_{jk}^{t}}\right\}$$
	$$\geq \left[\text{Diag}^2(\bm w)\right]_{jj}\left[\Ub^\top\Ub\right]_{kk}+\alpha_s^p\mathbf{Q}\mathbf{Q}^\top+\beta\left[L\right]_{jj}.$$
	This can be verified by comparing the first and third terms of the inequality, respectively.
	We have, according to the nonnegative constraints on $\Ub$ and $\Vb$,	$$\left[\text{Diag}^2(\bm w)\Vb^t \Ub^\top\Ub\right]_{jk}=\sum_l^K v^t_{jl}\left[\text{Diag}^2(\bm w)\right]_{jj}\left[\Ub^\top\Ub\right]_{lk} \geq v_{jk}^{t}\left[\text{Diag}^2(\bm w)\right]_{jj}\left[\Ub^\top\Ub\right]_{kk},$$
	$$\beta\left[\mathbf{D}\Vb^t\right]_{jk}=\beta\sum_{l=1}^M d_{jl}v^t_{lk}\geq\beta d_{jj}v^t_{jk}\geq \beta\left[\mathbf{D}-\mathbf{A}\right]_{jj}v^t_{jk}=\beta L_{jj} v^t_{jk}.$$
	Therefore, $G(v_{jk}, u_{jk}^{t})$ is an auxiliary function of $J_{jk}$.
	Replacing $G(h,h^t)$ in Eq. (\ref{nonin}) by $G(v_{jk}, v_{jk}^{t})$, we have
	\begin{equation*}
		\begin{aligned}
			v_{jk}^{t+1}&= v_{jk}^t- v_{jk}^t\frac{J_{jk}'\left(v_{j k}^{t}\right)}{\left[\text{Diag}^2(\bm w)\Vb\Ub^\top\Ub\right]_{jk}+\alpha_s^p\left[\Vb\mathbf{Q}\mathbf{Q}^\top\right]_{jk}+\beta\left[\mathbf{D}\Vb\right]_{jk}} \\
			&=v_{jk}^t\frac{\left[\text{Diag}^2(\bm w)\mathbf{X}^\top\Ub\right]_{jk}+\alpha_s^p\left[\Vb\mathbf{Q}^\top\right]_{jk}+\beta\left[\mathbf{A}\Vb\right]_{jk}}{\left[\text{Diag}^2(\bm w)\Vb\Ub^\top\Ub\right]_{jk}+\alpha_s^p\left[\Vb\mathbf{Q}\mathbf{Q}^\top\right]_{jk}+\beta\left[\mathbf{D}\Vb\right]_{jk}}.
		\end{aligned}
	\end{equation*}
	The result follows Lemma 1 in \citet{lee2001} that $J_{jk}$ is nonincreasing under the iteration $h^{t+1}=\arg\min_h G(h, h^t)$.
	Since the objective function is bounded below by 0, the monotone convergence theorem implies the convergence.

\section*{Appendix C}
\subsection*{Data information}

\begin{enumerate}
	
	\item \textbf{Synthetic dataset}: {This synthetic dataset is generated by a four-component Gaussian mixture model. The data contains six views, with the last two views being noisy.
More specifically, we randomly generate the cluster centers, denoted by $\bm\mu_1,\dots,\bm\mu_4$, for views $\mathbf{X}^{(1)}$ to $\mathbf{X}^{(4)}$. Each element of $\bm\mu_{j}$, $j = 1,...,4$ is independently drawn from the normal distribution with mean randomly generated from a uniform distribution $\mathcal{U}[a,a+10]$ and variance 1.
We set $a=10, 20 ,30 ,40$ for these four views. To generate the covariance matrix for each view, we first generate a random number $b$ from $\mathcal{U}[0.1,1]$, then multiply a symmetric matrix of all ones by $b$. Lastly, we take element-wise power of this matrix by  a symmetric Toeplitz matrix whose diagonals are all 0.
The prior proportions of the 4 components are set to be equal and sum to 1.
Further, we set $\mathbf{X}^{(5)}$ to be the same as $\mathbf{X}^{(1)}$ but with the first 300 observations added by random noises independently generated from $\mathcal{N}(0, 5)$.
We also let $\mathbf{X}^{(6)}$ to be the same as $\mathbf{X}^{(3)}$ but with the first 1000 observations added by random noises independently generated from $\mathcal{N}(0, 10)$.}
	
	\item \textbf{Handwritten digit dataset \footnote{https://archive.ics.uci.edu/ml/datasets/Multiple+Features}}: This dataset contains $2000$ digits and 10 labels.
	Each digit can be decomposed into four views: Fourier coefficients of the character shapes (\textbf{fou}), pixel averages in $2\times 3$ windows (\textbf{pix}), Zernike moments (\textbf{zer}) and profile correlations (\textbf{fac}). 
	

	\item \textbf{Liver hepatocellular carcinoma (LIHC)}: 
	This is a multi-omics dataset used in the application in \cite{seal2020estimating}. 
	Each sample has three different types of measurements (views): gene expression (GE), copy number variation (CNV), and DNA methylation (DNAm). 
	The processed dataset has 404 samples, and the three views have 15397, 16384, 16384  features, respectively.
	To further reduce the dimension, we select the top 100 most highly variable features for each view. In addition, these samples belong to either  tumor or normal samples,  where such class labels are known a priori. 
\end{enumerate}

\subsection*{Empirical analysis on the tuning parameters}
In this section, we show how to select $p$, $\beta$, and $K$ using the synthetic dataset and the handwritten digit dataset. The default values are $p=5$, $\beta=0.01$, and $K=10$. During the experiment, we change the target parameter and fix the remaining ones.

We first analyze the effect of $p$ on the algorithm performance.
Figure \ref{p} shows how the metric scores and the distribution of weights change as $p$ changes. {For the experiments, we let $p\in\{2,4,5,8,11\}$.}
The left panel shows $p$ controls the sparsity of the weight vector, i.e., the effect of different values of $p$ on the distributions of the weight vector.
As $p$ decreases, the weight vector $\bm\alpha$ becomes sparser.
The right panel shows that all the metric scores are close with {$p \in\{4,5,8,11\}$ (a moderate size). }

\begin{figure}[hbt!]
	\centering
	\includegraphics[width=0.7\textwidth]{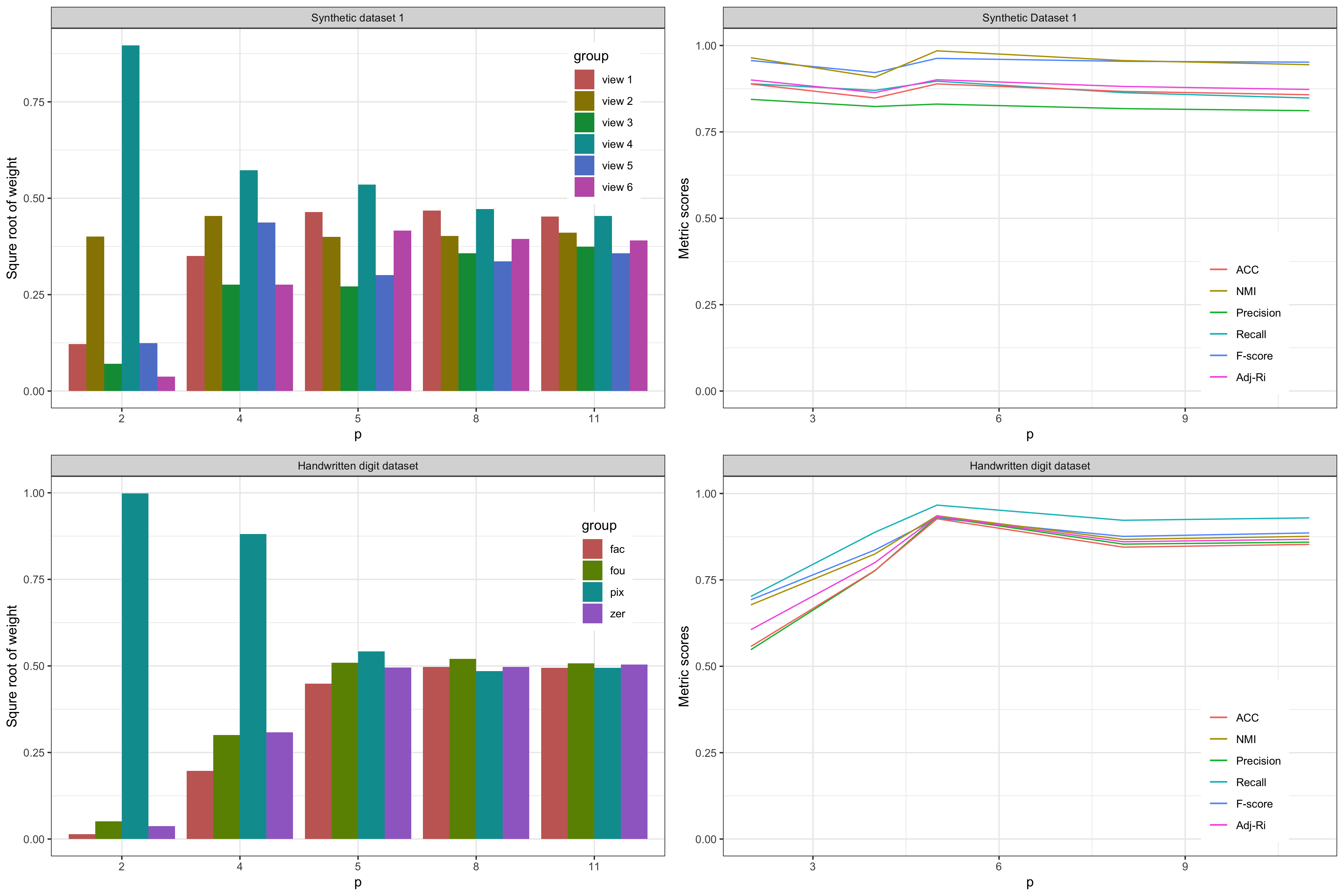}
	\caption{Distributions of view-specific weight $\bm \alpha$ (left) and the metric scores (right) under different values of $p$ for the synthetic dataset and handwritten digit dataset.}
	\label{p}
\end{figure}

Next we empirically illustrate how to choose $\beta$, the manifold regularizer.
Before algorithm implementation, entries in $\mathbf X$ are scaled so that the value $\mathcal{O}_1$ is in general small.
Consequently, we tend to choose a small $\beta$ to balance the matrix factorization effect and the manifold regularization effect. As we can see from Figure \ref{beta}, both datasets demonstrate robust results with different values of $\beta$.
This implies that the clustering performance is  robust to  relatively small $\beta$ values.
Finally, we empirically show how to choose $K$. As we can see from Figure \ref{K}, both datasets demonstrate robust results when $K$ lies in a neighbour of the ground truth.
This means that the choice of $K$ may not affect the clustering results even though it is overestimated or underestimated.

\begin{figure}[hbt!]
	\centering
	\includegraphics[width=0.7\textwidth]{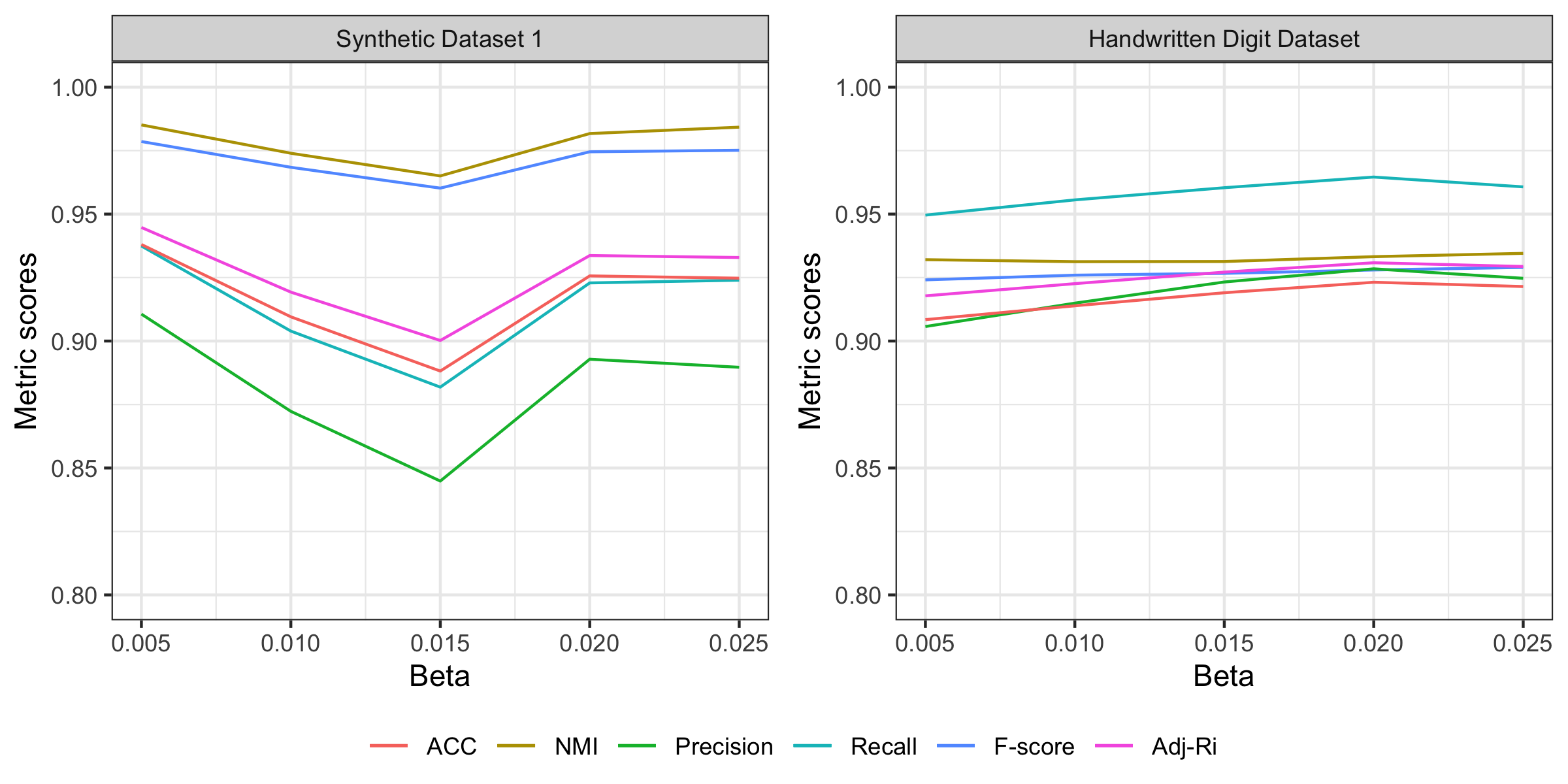}
	\caption{Metric scores under different values of $\beta$ on the synthetic dataset (left) and handwritten digit dataset (right). The x-axis is the value of different $\beta$ from 0.005 to 0.025. Note: we set the limits of the y-axis from 0.8 to 1.}
	\label{beta}
\end{figure}

\begin{figure}[hbt!]
	\centering
	\includegraphics[width=0.7\textwidth]{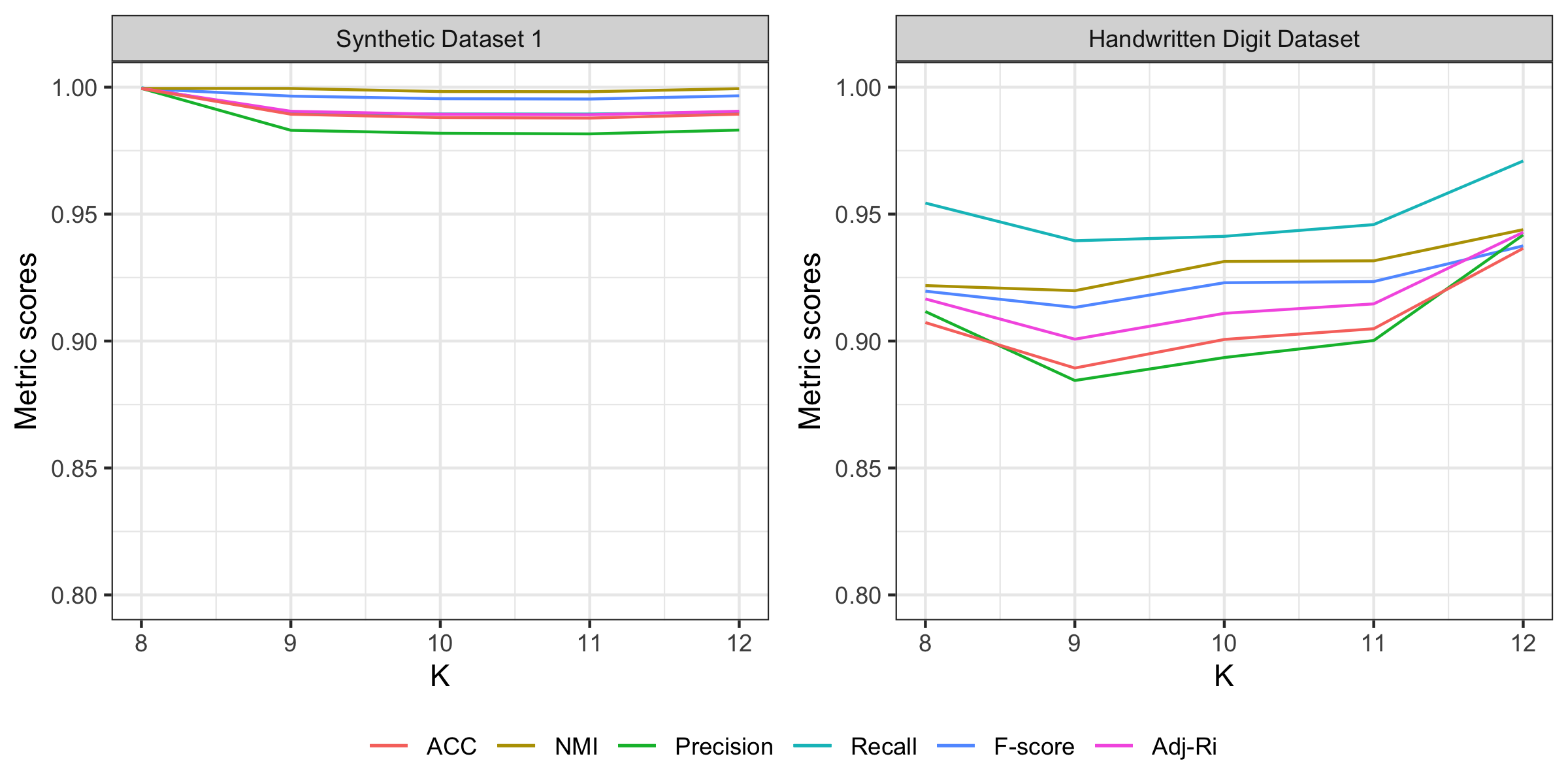}
	\caption{Metric scores under different values of $K$ on the synthetic dataset and handwritten digit dataset. The x-axis shows the value of different $K$ from 8 to 12. Note: we set the limits of the y-axis from 0.8 to 1.}
	\label{K}
\end{figure}

\subsection*{Complexity and convergence study}
\label{sec_comp}
To study the computational complexity, we run a series of experiments on a server with 10 processors and each processor (2.2 GHz Intel Xeon) uses 20GB memory.
We change $N$ and $n_v$ to investigate the corresponding effects.
The default setting is 5000 data points ($N=5000$), 4 views ($n_v=4$) with 10 clusters ($K=10$) and 100 features ($M=100$).
During the experiment, we change one aspect while keeping all others fixed.
The values of $N$ are set to 3k, 5k, 7k, 9k, and 11k, which is larger than $M$, so the theoretical complexity should be quadratic in $N$.
We find the running time is overall linear in terms of smaller $N$ and scales well for large $N$ (e.g. $N>9000$).
Even though the theoretical result for $N>M$ indicates quadratic complexity in $N$, the running time is still acceptable.
For $n_v$, we set its value from 2 to 6.
Row 1 of Figure \ref{con} shows the running time is linear in $n_v$.

\begin{figure*}[hbt!]
	\centering
	\includegraphics[width=0.7\textwidth]{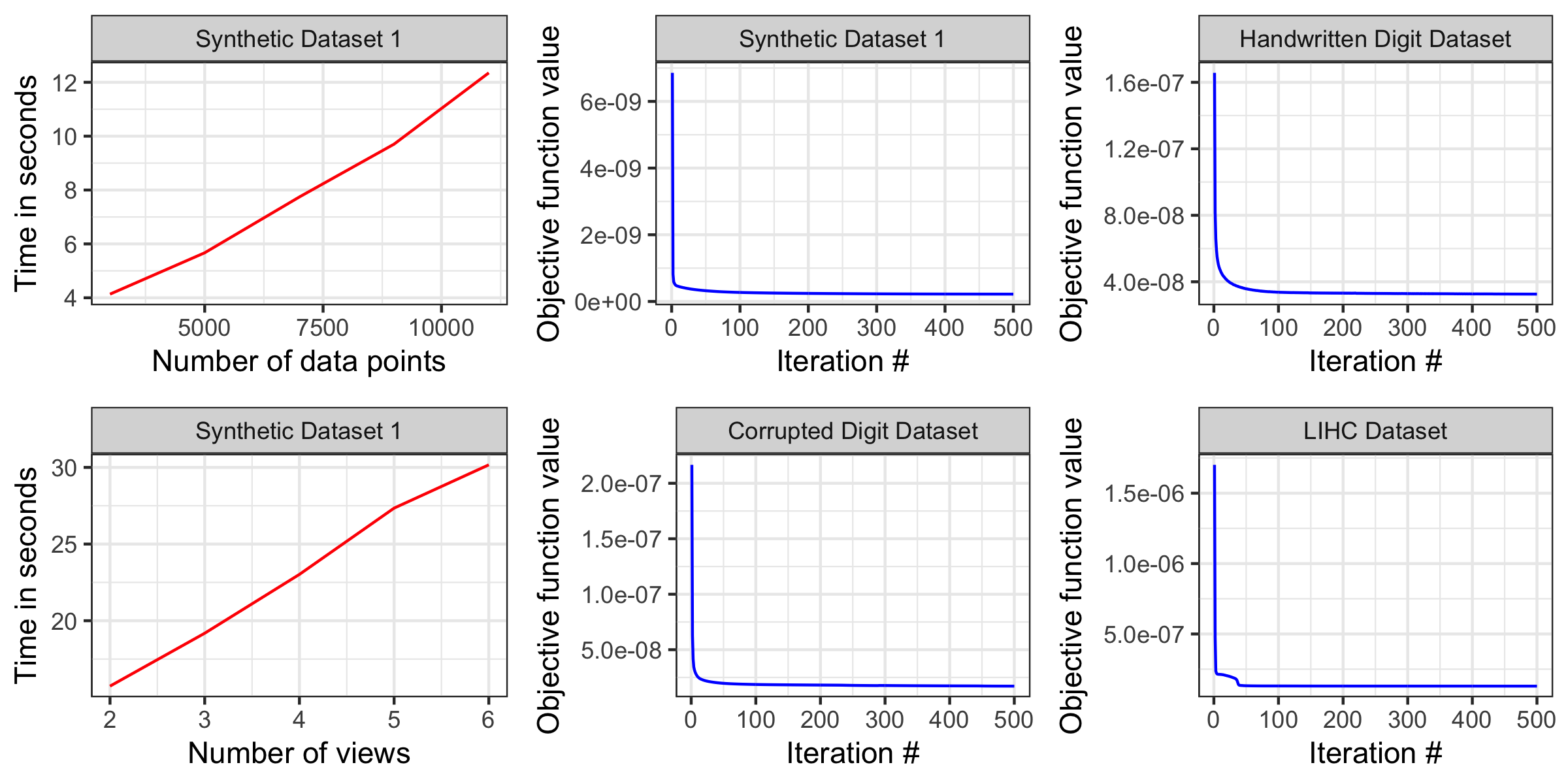}
	\caption{Running time of the WM-NMF algorithm on the synthetic dataset (row 1); convergence curves of WM-NMF algorithm on the  handwritten digit and LIHC dataset (row 2).}
	\label{con}
\end{figure*}

The multiplicative update rule for minimizing the objective function $\mathcal{O}$ is iterative.
Theorem \ref{conv} shows that the algorithm for updating $\mathbf U$ and $\mathbf V$ can converge to a local solution.
Here we investigate how fast the convergence is empirically.
Row 2 of Figure \ref{con} demonstrates the convergence curves.
The x-axis is the number of iterations and y-axis is the objective function value. 
We see that the algorithm converges very fast, usually within 50 iterations.

\section*{Appendix D}
\subsection*{Algorithm complexity analysis}
	\label{cc_analysis}
	We analyze the complexity for updating $\Ub^{(s)}$ and $\Vb^{(s)}$ in the inner iteration. 
	We divide the counts of iterations into multiplication, addition and division.  The overall complexity for the inner iteration is $O(M_sNK+N^2K)$ (we provide the details in appendix A). We summarize the operation counts in Table \ref{cc_sum}.
	
	\begin{table}[hbt!]
	\caption{Computational operation counts for each iteration of $\Ub^{(s)}$ and $\Vb^{(s)}$.}
		\label{cc_sum}
		\centering
		\resizebox{\textwidth}{!}{
			\begin{tabular}{|c|cccc|}
				\hline
				\textbf{}   & \multicolumn{1}{c}{multiplication} & \multicolumn{1}{c}{addition} & \multicolumn{1}{c}{division} & overall \\ \hline
				$\Ub^{(s)}$ &\makecell{$N+M_sN+M_sNK+(N+1)K+$\\$N+KN+2M_sNK+K(N+2)$} &\makecell{$KM_sN+NK+$\\$2M_sNK+K(M_s+N)$} &$M_sK$ &$O(M_sNK)$         \\\hline
				$\Vb^{(s)}$ &\makecell{$N+M_sN+M_sNK+2NK+N^2K$\\$N+KN+2M_sNK+3KN+K$} &\makecell{$KM_sN+N^2K+$\\$2M_sNK+KN^2$} &$NK$ &$O(M_sNK+N^2K)$        \\\hline
		\end{tabular}}
	\end{table}
	
	Further, suppose there are $t_1$ iterations for updating $\mb U^{(s)}$ and $\mb V^{(s)}$ for each view, then the complexity for all views is $O\left\{t_1 n_v (M_*NK+N^2K)\right\}$, where $M_*$ denotes the maximum of $\{M_1,\dots,M_{n_v}\}$.
	After the $t_1$ inner iterations of $\Ub^{(s)}$ and $\Vb^{(s)}$, we still need $O(n_v)$ for  $\alpha_s$, $O(n_v NK)$ for $\Vb^*$, and $O(n_v N)$ for $\text{Diag}(\bm w)$.
	Therefore, for each iteration of the whole procedure of Algoritm~\ref{algorithm1}(lines 4-12), the total complexity is $O\left\{t_1 n_v (M_*NK+N^2K)\right\}$.
	Suppose $t_2$ outer iterations are taken for $\mathcal{O}$ to converge or reaching the maximum number of iteration, then the overall algorithm takes time $O\left\{t_1 t_2 n_v (M_*NK+N^2K)\right\}$ for $N>M_*$.
	When $N<M_*$, we have the overall complexity $O\left\{t_1 t_2 n_v M_*NK\right\}$.
	
	\newpage
	\bibliography{ref}
	\bibliographystyle{plain}
	
\end{document}